\documentclass{dynafront2025} 

\newcommand{\bigdotcup}{\bigcup\mkern-13.5mu\cdot\mkern6mu}
\newcommand{\A}{\mathcal{A}}

\newcommand{\Z}{\mathcal{Z}}

\newcommand{\G}{\mathcal{G}}
\newcommand{\State}{\mathcal{S}}

\newcommand{\Hist}{\mathcal{H}}

\newcommand{\Ell}{\mathcal{L}}
\newcommand{\I}{\mathcal{I}}

\newcommand{\E}{\mathbb{E}}

\title[Generative Sampling in Incomplete Information Games]{IFlowNets: Extending Generative Samplers to Learn Strategies in Incomplete Information Games}

\optauthor{%
\Name{Conor M. Artman} \Email{artman1@llnl.gov}\\
\addr AI Research Group, Lawrence Livermore National Laboratory
\AND
\Name{Nicholas Di} \Email{nd56@rice.edu}\\
\addr Deptartment of Statistics, Rice University,\\%
\addr AI Research Group, Lawrence Livermore National Laboratory%
\AND
\Name{Scott Perkins} \Email{perkins35@llnl.gov}\\
\addr Strategic Competition Analysis Group, Lawrence Livermore National Laboratory%
}

\begin{document}

\maketitle

\begin{abstract}%
  
  While many algorithms blend reinforcement learning (RL) with 
  counterfactual regret (CFR) methods to leverage tradeoffs in computational speed and performance, there are fewer investigations into 
  generative sampling frameworks in game theoretic applications in incomplete information games. We extend a generative flow network framework, \textit{Adversarial Flow Networks} (AFlowNets), to incomplete information games, called \textit{Information Flow Networks} (IFNs). 
  We prove that previously established constraints for generative flow networks in complete information games are inadmissible for obtaining valid densities (corresponding to player strategies) and a valid training objective. 
  We show that our proposed generalization, IFlowNets, alleviates this issue and strictly generalizes AFlowNets. 
  In preliminary results for three standard game environments, IFlowNets perform comparably to or better than Outcome Sampling Monte Carlo Counterfactual Regret (OS-MCCFR) and standard RL-based methods in performance and speed. 





\end{abstract}


\section{Introduction}

Generative flow networks (GFlowNets) operate by converting \textit{flow matching constraints} over directed acyclic graphs (DAGs) into an objective function, which is 
then used to fit a sampler that respects terminal, reward-proportional sampling via direct, gradient-based search. In DAGs with 
intragraph uncertainty, \citet{jiralerspong_expected_2024} extended GFlowNets to handle uncertain intragraph transitions by using 
alternating samplers that use expectations to smooth over uncertainty,  called \textit{Expected Flow Networks} (EFlowNets). 
As an application, \citet{jiralerspong_expected_2024} showed that one can alternate these samplers as agents playing a complete information extensive form game, 
called adversarial flow networks (AFlowNets). AFlowNet agents learn a policy that samples proportional to the \textit{expected utility} under environment and player transition dynamics,
which produced results as good or better than AlphaZero augmented by Monte Carlo Tree Search (as well as standard RL 
methods, e.g., DQN, A2C and PPO). Importantly, AFlowNets rely solely on generating trajectories and observing outcomes 
strictly during training. In contrast, many RL-centric methods, such as AlphaZero, \textit{need} approximate recursive search to 
perform well. Further, the approach appears promising, because in perfect information environments, AFlowNets learn to play effectively in the vast majority of states \textit{without} any 
form of inference-time search and human heuristics, despite only seeing a small fraction of these states. Despite these observations, generative flow network-based methods have not been investigated in the context of non-stationary, sequential, or incomplete 
information games. In this new context, we theoretically demonstrate that our extension, Information Flow Networks (IFlowNets), generalizes AFlowNets
and validly reproduces the expected flow-matching property. The expected flow matching property is known to induce a generalized Nash equilibrium called a \textit{quantal response equilibrium} (QRE), 
and are typically difficult to estimate \citep{jiralerspong_expected_2024, bland_quantal_2025}. (We defer studying of QRE estimated by IFlowNets
compared to classic methods for future work, but see Appendix \ref{appdx:qre} for a brief description of IFlowNets as QRE.) Our contributions are as follows.



\begin{enumerate}
    \item We prove that directly applying \citet{jiralerspong_expected_2024}'s approach to handling intragraph uncertainty is impossible: doing so \textit{invalidates} properties necessary for expected reward-proportional sampling in incomplete information settings. 
    \item We generalize AFlowNets to incomplete information settings and demonstrate they preserve desired properties in the literature, e.g., the flow matching property.
    \item We test our IFlowNet formulation in three standard incomplete information environments and show preliminary results against Outcome Sampling Monte Carlo CFR, deep CFR, and neural fictitious self play (NFSP).
\end{enumerate}

\section{Background \& Notation}
$[N]:=\{1, \ldots, N\}$ 
is the set of players, indexed by $i$. $\Hist$ represents sequences denoting the possible \textit{histories or trajectories} of 
actions. For $h, h' \in \Hist$, let $h'$ be a longer history than $h$. 
Starting at 
any $h$ and choosing an action $a$ leads to the next history, $h' := (h,a)$. $h$ is a ``prefix" of $h'$. $\Z \subseteq \Hist$ are terminal histories, and each $z$ is a sequence of actions representing a complete path from the start of the game to a terminal node. 
The set of all actions for the current player 
acting at a non-terminal history $h$ is $\A(h) := \{a : (h,a) \in \Hist\}$. Actions available player $i$ 
at $h$ is denoted $\A_{i}(h)$, $h \in \Hist \setminus \Z$. In RL terminology, if all histories were 
fully observable at every time-step, we could just call them states, denoted $s \in \State$. However, due to uncertainty in the game, agents may 
not know what state they are in at a particular decision point. An information state $I$ (\textit{infostate} or \textit{infoset}), is an 
aggregate state over all the possible states a player could be in at a decision point. Precisely, $\forall h, h' \in I$ 
with $h\neq h'$, $h$ and $h'$ are indistinguishable to an agent. Equivalently, $\A(h) = \A(h')$. We distinguish between 
different agents' infostates by \textit{information partitions}, $\I_i$. $\I_i$ partitions 
the set of all histories into player $i$'s potential histories, also with the property $\A(h) = \A(h')$ whenever $h, h' \in \I_i$. 
Subscripts denote which agent's action space and infostates we are referring to. 
Each player has a utility function $u_i: \Z \mapsto \mathbb{R}$, and we interpret reward functions $R_i$ to be transformations of utility functions. Utility functions 
determine classes of games, e.g., if $[N] =\{1,2\}$ and $u_1 = - u_2$, the game is a two-player zero-sum game. We will 
be focused on zero-sum extensive form games of incomplete information, which are DAG-structured games. We assume 
perfect recall throughout. The set of child nodes of an (info)states $\text{Ch}(\cdot)$ is all 
next-possible transitions from that history or infostate. 
Reward proportional sampling is 
$P_i(z) \propto R_i(z)$, for sampler (or strategy) $P_i$;
 expected reward proportional sampling 
refers to $P_i(z) \propto \E_{-i}[R(z)]$, where $-i$ is all players \textit{except} $i$. $F$ refers to 
the ``flow'' function, which maps (info)states to unnormalized probability mass.

\subsection{Review of Expected \& Adversarial Flow Networks}

Due to space limits, we defer review of expected detailed balance (EDB) and AFlowNets to Appendix \ref{appdx:review}. 

\section{Methodology}

Before discussing technical details, we summarize our findings as follows.

\begin{enumerate}
    \item The EDB constraints for AFlowNets are inadmissible for incomplete information games, i.e., the same expectation-based smoothing approach as \citet{jiralerspong_expected_2024} cannot work.
    \item Consequently, we cannot rely on the theorems developed by \citet{jiralerspong_expected_2024}. 
    Therefore, we must fix and extend their constraints to justify using the usual trajectory balance objective for generative flow network models \citep{madan_learning_nodate}. 
\end{enumerate}



\subsection{Aggregating Flows over Infostates.}

Below we discuss the two main steps for fixed and generalizing AFlowNets to incomplete information games. (See Appendix \ref{appdx:new-cases}, Figure \ref{fig:case-diagrams} for graphics visualizing the two new cases 
that must be accounted for.) 

\paragraph{Generalization \#1: Information Set Aggregation.} Our first insight is that the EDB are missing an extra necessary constraint to be applied to incomplete information games that we refer to as \textit{infostate aggregation property}: $F(I) = \sum_{h \in I} F(h)$. While this appears similar to flow matching, this is actually a self-consistency condition, whereas flow matching asserts that all inflows must match all outflows, $F(I) = \sum_{I' \in Ch(I)} F(I')$. 
(see Appendix \ref{appdx:infostate-agg} for more discussion.)


\paragraph{Generalization \#2: Higher Intragraph Uncertainty in Incomplete Information games.} 
Due to agent-level uncertainty about which history an agent occupies in an infostate, there is another layer of uncertainty to aggregate or smooth over, which we refer to as \textit{double intragraph uncertainty}. In incomplete information games, we find that applying the same principles \citet{jiralerspong_expected_2024} used to derive the EDB do not work in the incomplete information setting
(see Theorem \ref{thm:dedb-fails}). 
(See Figure \ref{fig:case-diagrams} for diagrams describing new uncertainties introduced by incomplete information games.)

To illustrate why these generalizations are necessary, we naively extend \citet{jiralerspong_expected_2024}'s to incomplete information settings and re-apply the idea of using expectations to smooth over intragraph uncertainty. We refer to these as the \textit{Double Expected Detailed Balance} (DEDB) constraints.

\begin{align}
  F(z) &= R(z)\label{eqn:D1}\tag{D1}, \hspace{5mm} (\forall z \in \mathcal{Z}) \\ 
  F(I) &= \E_{I' \sim P_{env}(\cdot \vert I)} \Big [ F(I') \Big]\label{eqn:D2}\tag{D2}, \hspace{5mm} ( \forall h, h' \in \mathcal{I}_{env}) \\
  F(I) P_{agent}(a \vert I) & = \E_{h' \sim P_{env}} \Big [F(h')\Big ]\label{eqn:D3}\tag{D3}, \hspace{5mm} ( \forall a, h', I \in \mathcal{I}_{agent}) \\
  F(I) &= \sum_{h \in I} F(h)\label{eqn:D4}\tag{D4}, \hspace{5mm} ( \forall h, I \in \mathcal{I})
\end{align}

While we augment \citet{jiralerspong_expected_2024}'s constraints to respect infostates with \ref{eqn:D4}, the key issue remains in constraint \ref{eqn:D3}. Theorem $1$ implies $P_{env}$ ceases to be a 
valid probability mass function in order to satisfy flow matching, so it is impossible to learn valid sampling strategies.

\begin{theorem}(\textit{Expectation-Based Aggregation over Infosets via DEDB Invalidates Flow Matching})\label{thm:dedb-fails}(see Appendix \ref{appdx:exp-failure} for proof of Theorem 1.)
\end{theorem}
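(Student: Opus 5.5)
The plan is to argue by contradiction: assume a flow $F$ and strategies $P_{agent}$, $P_{env}$ satisfy \ref{eqn:D1}--\ref{eqn:D4} together with flow matching at an agent infostate, $F(I)=\sum_{I'\in\text{Ch}(I)}F(I')$. We then show that $P_{env}(\cdot\mid I)$ cannot be a probability mass function on the histories in the relevant infostate. A single counterexample would be enough, but I would state the argument for an arbitrary agent infostate $I$ with $|I|\ge 2$ (nontrivial incomplete information) and positive flows, and then specialize.

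\emph{Step 1.} Fix $I\in\I_{agent}$ and an action $a\in\A(I)$. Write $I_a$ for the successor infostate reached by $a$, with its histories $h'=(h,a)$, $h\in I$. Summing \ref{eqn:D3} over $a$ and using $\sum_a P_{agent}(a\mid I)=1$ gives
\[
F(I)=\sum_{a}\E_{h'\sim P_{env}}\big[F(h')\big].
\]
\emph{Step 2.} Apply \ref{eqn:D4} to each child infostate, giving $F(I_a)=\sum_{h'\in I_a}F(h')$. Flow matching then reads $F(I)=\sum_a\sum_{h'\in I_a}F(h')$. Equating this with Step 1 and comparing term by term in $a$ yields
\[
\sum_{h'\in I_a}P_{env}(h'\mid I)\,F(h')=\sum_{h'\in I_a}F(h').
\]
I would justify the term-by-term comparison by varying $P_{agent}$ or by taking \ref{eqn:D3} with flow matching per action. \emph{Step 3.} Since the $F(h')>0$, the left side is a convex combination scaled by total weight $\sum P_{env}=1$. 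So it is at most $\max_{h'}F(h')<\sum_{h'\in I_a}F(h')$ whenever $|I_a|\ge 2$. Equality would instead force $\sum_{h'}P_{env}(h'\mid I)F(h')=\sum_{h'}F(h')$, i.e.\ weights $P_{env}\equiv 1$ on each $h'$. Then $\sum_{h'}P_{env}(h'\mid I)=|I_a|>1$, so $P_{env}$ is not a valid pmf. Alternatively one can rescale to $P_{env}(h'\mid I)\propto F(h')$ with total mass $|I_a|$. Either way, flow matching together with \ref{eqn:D4} contradicts normalization. The conclusion is that under DEDB, either flow matching fails or $P_{env}$ is invalid. When $|I|=1$ the argument collapses to the AFlowNet EDB case, which explains why the perfect-information setting is unaffected.

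I expect the main obstacle to be Step 2. The constraint \ref{eqn:D3} is stated loosely: it is unclear whether the expectation is over $h'\in I_a$ or over $I'$, and whether it is per action. So the decisive step is fixing a precise reading in which the expectation ranges over the histories of the child infostate, and proving the per-action identity rather than only the summed one. If that reading is contested, I would fall back on an explicit minimal game. Take a chance node dealing one of two private cards uniformly, followed by a single agent infostate $I=\{h_1,h_2\}$ with one action leading to terminals $z_1,z_2$ with rewards $R(z_1),R(z_2)>0$. By \ref{eqn:D1} and \ref{eqn:D4}, $F(I)=R(z_1)+R(z_2)$, while \ref{eqn:D3} gives $F(I)=p\,R(z_1)+(1-p)R(z_2)$. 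For $p\in[0,1]$ these cannot agree, which exhibits the failure concretely.
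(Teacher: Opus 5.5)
Your proposal is correct and follows essentially the paper's own argument. You sum \ref{eqn:D3} over actions, equate the result with flow matching rewritten via \ref{eqn:D4}, and conclude that equality forces $P_{env}(h)=1$ for every $h$, which contradicts normalization when $|I|\ge 2$; your positivity and convexity bound also supplies the justification the paper leaves implicit for passing from the vanishing sum to $P_{env}\equiv 1$. The per-action comparison you flag as the main obstacle is not needed, since your strict bound holds for each action separately, and summing it over $a$ already contradicts the summed identity from your Step 1.
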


In response, we propose the \textit{generalized expected detailed balance (GEDB) constraints}. 
Our GEDB constraints over DAGs of incomplete information games accomplish the following.\\ 1. The GEDB constraints successfully induce expected flow matching property over \textit{infostates}.
\\2. The GEDB automatically recover the EDB constraints in complete information settings. \\3.   (D0-D3 in \citep{jiralerspong_expected_2024}). 
The last item follows directly if items $1$ and $2$ are satisfied, so we focus on the first two \citep{jiralerspong_expected_2024}.


\subsection{Generalized Expected Detailed Balance}
In addition to adding the necessary infostate aggregation constraint, we find that constraint \ref{eqn:C3} corrects \ref{eqn:D3}. This allows for flow matching to hold over infostates, which paves the way for expected reward proportional sampling in incomplete information games. 

\begin{align}
    F(z) &= R(z)\label{eqn:C1}\tag{C1}, \hspace{5mm} (\forall z \in \mathcal{Z}) \\ 
    F(I) &= \E_{I' \sim P_{env}(\cdot \vert I)} \Big [ F(I') \Big]\label{eqn:C2}\tag{C2}, \hspace{5mm} ( \forall h, h' \in \mathcal{I}_{env}) \\
    F(I) P_{agent}(a \vert I) & = \sum_{h' \ni a} F(h')\label{eqn:C3}\tag{C3}, \hspace{5mm} ( \forall a, h', I \in \mathcal{I}_{agent}) \\
    F(I) &= \sum_{h \in I} F(h)\label{eqn:C4}\tag{C4}, \hspace{5mm} ( \forall h, I \in \mathcal{I})
\end{align}


\noindent In the next lemma, the GEDB constraints imply that the graph-level flow function $F$ satisfies expected flow matching over infosets.
As a consequence, it is valid to use the well-known \textit{trajectory balance} objective function to fit agent policies over infostates rather than complete information states \citep{madan_learning_nodate,jiralerspong_expected_2024}. (See Appendix \ref{appdx:implementation} 
for trajectory balance objective and implementation details.)
\begin{lemma}(\textit{Generalized Expected Detailed Balance Conditions Imply Expected Flow Matching over Infosets.})\label{lemma:FM}  See Appendix \ref{appdx:infoset-fm}.
\end{lemma}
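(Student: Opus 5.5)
The plan is to verify the two halves of expected flow matching over infostates separately, depending on who acts at an infostate $I$. Following \citet{jiralerspong_expected_2024}, expected flow matching means two things. At an environment infostate, $F(I)$ equals the expected flow of its children under $P_{env}$. At an agent infostate, $F(I)$ equals the sum of flows into its children, and the induced $P_{agent}(\cdot\vert I)$ is a valid probability mass function. Both are to hold simultaneously with \ref{eqn:C1} fixing the terminal flows.

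The environment case comes directly from \ref{eqn:C2}. The only thing to check is that it is compatible with \ref{eqn:C4}. I would expand $F(I')=\sum_{h'\in I'}F(h')$ for each child $I'$ and use perfect recall to argue that the children of distinct histories in $I$ fall into disjoint child infostates. This ensures no history flow is counted twice when we pass from history-level to infostate-level sums.

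The agent case is the core, and I would argue it as follows.
\begin{enumerate}
\item Sum \ref{eqn:C3} over $a\in\A(I)$. This is legitimate because $\A(h)=\A(I)$ for all $h\in I$. It gives
\[
F(I)\sum_{a}P_{agent}(a\vert I)=\sum_{a}\sum_{h'\ni a}F(h').
\]
\item Show that the double sum on the right equals $\sum_{I'\in \mathrm{Ch}(I)}F(I')$. Here the set $\{h' : h'\ni a\}$ is read as all successor histories $(h,a)$ with $h\in I$. I would argue that, as $a$ ranges over $\A(I)$ and $h$ over $I$, these successors partition the histories lying in the children of $I$. Perfect recall and the partition property of $\I_i$ give disjointness, and then \ref{eqn:C4} regroups the sum by child infostate.
\item Establish the interior consistency condition $\sum_{h\in I}F(h)=\sum_{h\in I}\sum_a F((h,a))$, i.e.\ history-level flow conservation, by backward induction from the terminal histories. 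The base case is \ref{eqn:C1}, and the inductive step uses \ref{eqn:C2} and \ref{eqn:C3} at deeper infostates. Combining this with \ref{eqn:C4} shows that the right-hand side in step 1 equals $F(I)$.
\item Conclude that $\sum_a P_{agent}(a\vert I)=1$ whenever $F(I)>0$. This is exactly the normalization that fails for \ref{eqn:D3} in Theorem~\ref{thm:dedb-fails}. It also yields $F(I)=\sum_{I'\in\mathrm{Ch}(I)}F(I')$.
\end{enumerate}

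Finally, I would check that the conditions collapse to the EDB conditions in the complete-information case. There every infostate is a singleton $\{h\}$, so \ref{eqn:C4} is trivial and \ref{eqn:C3} reduces to $F(h)P(a\vert h)=F((h,a))$. This confirms that the lemma is a strict generalization.

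I expect the main obstacle to be the bookkeeping in steps 2 and 3. The risk is double-counting, or missing, histories that are shared between infostates of different players, since $F(h)$ appears both in the opponent's and in the acting player's aggregation. I would handle this by always indexing the sums at the history level and invoking \ref{eqn:C4} only for the acting player's partition at each node. If the backward induction in step 3 does not close, the fallback is to define $F(h)$ for interior histories recursively from the terminal rewards. Then \ref{eqn:C4} becomes the definition of $F(I)$, and conservation holds by construction.
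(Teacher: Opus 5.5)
Your steps 1 and 2 are essentially the paper's proof. Steps 3 and 4, however, run the logic backwards and cannot be carried out. By \ref{eqn:C4} and \ref{eqn:C3} at the same infostate $I$, you have $\sum_{h\in I}F(h)=F(I)$ and $\sum_{h\in I}\sum_{a}F((h,a))=F(I)\sum_{a}P_{agent}(a\vert I)$. So the interior consistency condition of step 3 is literally equivalent to $\sum_{a}P_{agent}(a\vert I)=1$, which is exactly what step 4 is supposed to deduce from it. No backward induction can supply it: constraints at deeper infostates involve only histories strictly below $I$, and the only link between $\{F(h)\}_{h\in I}$ and the successor flows is $P_{agent}(\cdot\vert I)$ itself. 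For a concrete check, let $I=\{h\}$ have two actions leading to terminals $z_1,z_2$ with $R(z_1)=R(z_2)=1$. Then $F(h)=F(I)=c$ and $P_{agent}(a_1\vert I)=P_{agent}(a_2\vert I)=1/c$ satisfy \ref{eqn:C1}--\ref{eqn:C4} for every $c>0$, yet conservation holds only for $c=2$. Normalization is therefore not a consequence of the GEDB; it is part of the hypothesis that $P_{agent}(\cdot\vert I)$ is a policy. The paper uses it exactly there: summing \ref{eqn:C3} over $a\in\A(I)$ with $\sum_a P_{agent}(a\vert I)=1$ gives $F(I)=\sum_{a}\sum_{h'\ni a}F(h')$, which is then regrouped as $\sum_{I'\in\mathrm{Ch}(I)}F(I')$. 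Your fallback, defining interior $F(h)$ so that conservation holds by construction, does not rescue the argument. It establishes a property of one constructed flow, not of every flow satisfying \ref{eqn:C1}--\ref{eqn:C4}, which is what the lemma asserts.

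Separately, the regrouping in step 2 cannot be justified by perfect recall, which constrains only a player's own sequence of infostates and actions. When the children of $I$ are opponent infostates, they generally contain histories that do not descend from $I$. In Kuhn poker, take $I$ to be player~1 holding J. A bet from $I$ leads into player~2's infostate $\{(\mathrm{JQ},\mathrm{bet}),(\mathrm{KQ},\mathrm{bet})\}$, whose second history descends from player~1's K-infostate. So $\sum_{I'\in\mathrm{Ch}(I)}F(I')$, computed through \ref{eqn:C4}, picks up flow from outside the subtree of $I$ and is not equal to $\sum_{a}\sum_{h\in I}F((h,a))$. The paper sidesteps this by making the regrouping a definition, i.e., identifying the child of $I$ under $a$ with the set $\{(h,a):h\in I\}$; you should adopt that convention explicitly. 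With normalization taken as a hypothesis and this identification in place, your steps 1 and 2 already give $F(I)=\sum_{I'\in\mathrm{Ch}(I)}F(I')$, and steps 3 and 4 should be dropped.
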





The next Lemma indicates that the GEDB allow IFlowNets to automatically switch between complete and incomplete information games by reparameterizing states. 

\begin{lemma}(\textit{The GEDB Constraints Recover the EDB Constraints in Complete Information Settings.})\label{lemma:gedb_generalize_edb} See Appendix \ref{appdx:gedb-edb} for proof.
\end{lemma}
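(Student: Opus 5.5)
The proof specializes the GEDB constraints (C1)--(C4) to the complete information case and reads off the EDB constraints (D0--D3 in \citep{jiralerspong_expected_2024}). I would first formalize ``complete information'' as the condition that every information partition $\I_i$ (and $\I_{env}$) is the finest partition of $\Hist\setminus\Z$: every infostate is a singleton, $I=\{h\}$, which we identify with the state $s=h\in\State$. Perfect recall is automatic here, and $\A(I)=\A(h)$, so $P_{agent}(\cdot\vert I)$ becomes a policy $P_{agent}(\cdot\vert s)$ over states. All subsequent steps are substitutions under the reparameterization $I\mapsto s$.

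I would then go through the constraints one at a time.
\begin{enumerate}
\item (C1) contains no infostates, so it is identical to the terminal condition $F(z)=R(z)$ of the EDB.
\item In (C4), the sum $\sum_{h\in I}F(h)$ has the single term $F(s)$, so (C4) reduces to the tautology $F(s)=F(s)$ and can be dropped. This is the reason the aggregation constraint is invisible in \citet{jiralerspong_expected_2024}.
\item For (C2), $I'\sim P_{env}(\cdot\vert I)$ becomes $s'\sim P_{env}(\cdot\vert s)$. This gives $F(s)=\E_{s'\sim P_{env}(\cdot\vert s)}[F(s')]$, which is exactly the EDB constraint at environment (or opponent) nodes.
\item For (C3), I would make precise the index set ``$h'\ni a$'': it is the set of children $(h,a)$ with $h\in I$. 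For a singleton $I=\{s\}$ this set contains only $s'=(s,a)$. Hence (C3) becomes $F(s)P_{agent}(a\vert s)=F(s')$ with $s'=(s,a)$, which is the EDB detailed-balance constraint at agent nodes.
\end{enumerate}

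The last step is the converse direction, so that ``recover'' means the two systems are equivalent rather than one merely implying the other. Given any $F, P_{agent}$ satisfying the EDB, I would set $F(I):=F(s)$ for $I=\{s\}$ and check that (C1)--(C4) hold. This is immediate from the computations above, since each reduction is an identity of equations. If AFlowNets also use the zero-sum reward parameterization, I would note that it is unchanged, because $R$ enters only through (C1).

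I expect the main obstacle to be (C3). Its notation $\sum_{h'\ni a}$ is informal, and the reduction depends on reading it as a sum over the successors of histories in $I$ obtained by taking action $a$. I would state this interpretation explicitly and check that it is the same one under which Lemma~\ref{lemma:FM} is proved, so the two lemmas stay consistent. A second subtlety is the role of the environment/opponent at agent nodes. In the EDB the opponent's transition appears inside the expectation in the next constraint, not in the agent constraint, so I would check that the ordering of agent and environment nodes in the game DAG matches that of \citet{jiralerspong_expected_2024} once infostates are collapsed.
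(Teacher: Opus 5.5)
Your proposal is correct and follows essentially the same route as the paper's proof. Both specialize to singleton infostates, note that (C4) becomes a tautology and (C1) is unchanged, and reduce (C2) and (C3) to the EDB expectation and detailed-balance constraints by identifying $I$ with $s$ and collapsing the sum over $h'\ni a$ to the single child $s'=(s,a)$. Your converse check, lifting an EDB solution via $F(\{s\}):=F(s)$, is a small addition the paper omits (it only argues that (C1)--(C4) imply (E1)--(E3)), and it makes ``recover'' a genuine equivalence.
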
  

   




\section{Experiments}

We evaluate our IFlowNet formulation in the following tasks: \textit{Weighted RPS, Kuhn Poker, Leduc Poker}. In the latter two,
we compare to Neural Fictitious Self Play (NFSP), Deep CFR, and Outcome Sampling Monte Carlo CFR (OC-MCCFR). 

\paragraph{Weighted Rock-Paper-Scissors}

In Weighted Rock-Paper-Scissors (RPS+) the wins and losses are doubled if Scissors is played. RPS+ is played in extensive form, 
so Player $1$ chooses a move first and Player $2$ observes Player $1$'s choice after the game ends. Player $2$ 
is always at an incomplete information state, and Player $1$ must learn to play over the uncertainty of Player $2$'s strategy.
The Nash equilibrium
in RPS+ is $(R,P,S)=(.4, .4, .2)$ for both players. \footnote{Note:
in repeated play (such as our setup) RPS+ is deceptively difficult for RL and is a benchmark 
for multi-agent learners' ability to adapt in non-stationary, incomplete information environments \citep{lanctot_population-based_2023}.}
IFlowNets find the Nash strategy (Figure \ref{fig:weighted-rps}) and approach minimal exploitability (see Figure \ref{fig:weighted-rps-kld}).

\begin{figure}[htp] 
    \centering
    \subfigure[]{\includegraphics[width=0.4\textwidth]{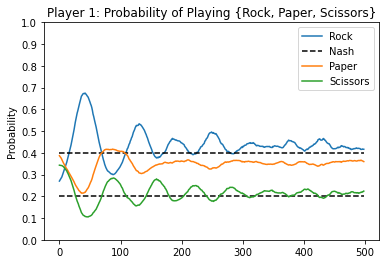}}
    \subfigure[]{\includegraphics[width=0.4\textwidth]{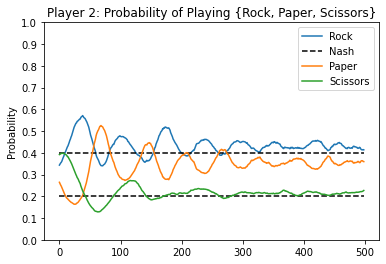}}
    \caption{Policies per training iteration in RPS+. (a) Player $1$ (b) Player $2$}
    \label{fig:weighted-rps}
\end{figure}

\paragraph{Kuhn Poker} Kuhn poker is a simplified form of poker as a simple model zero-sum two-player 
imperfect-information games (\cite{kuhn_contributions_2016}). In Kuhn poker, the deck includes only three playing 
cards: a King, Queen, and Jack. One card is dealt to each player, which may place bets similarly to a standard poker. 
If both players bet or both players pass, then the player with the higher card wins. Otherwise, the player who bet wins.

\begin{table}[h]
    \begin{center}
        \begin{tabular}{ c c c } 
        \hline
        Algorithm & Exploitability & Iters./Sec. \\
        \hline 
        IFlowNet & 0.087 & 340 \\ 
        OS-MCCFR & \textbf{0.068} & \textbf{1211} \\ 
        NFSP & 0.417 & 0.35 \\ 
        DeepCFR & 0.451 & 0.008 \\ 
        
        \hline
    \end{tabular}
    \end{center}
    \caption{Kuhn Poker with $n=2$ players after $10,000$ iterations.}
\end{table}

\paragraph{Leduc Poker} Leduc Hold’em is 2-player Limit Texas Hold’em, with
2 rounds and a six card deck (Jack, Queen, and King in 2 suits). At the beginning of the game, each player 
receives one card and, after betting, a public card is revealed. After another round, the player 
with the best hand wins and receives a reward 1 and the loser receives -1. At any time, players can fold.

\begin{table}[h]
    \begin{center}
        \begin{tabular}{ c c c } 
        \hline
        Algorithm & Exploitability & Iters./Sec. \\
        \hline
        IFlowNet & \textbf{1.287} & \textbf{55} \\ 
        OS-MCCFR & 2.725 & 7.62 \\ 
        NFSP & 2.691 & 0.09 \\ 
        DeepCFR & 1.431 & 0.01 \\ 
        \hline
    \end{tabular}
    \end{center}
    \caption{Leduc Poker with $n=2$ players after $10,000$ iterations.}
\end{table}

\section{Discussion \& Conclusion}

We show IFlowNets generalize AFlowNets to incomplete information games and show preliminary results comparable to 
or better than closely-related MCCFR and RL variants. In Kuhn Poker, IFlowNets perform comparably to OS-MCCFR in exploitability but is slower. In Leduc Poker, IFlowNets perform best and
are faster to compute compared to relevant CFR and Deep RL variants. 

\section*{Acknowledgments}

This work was performed under the auspices of the U.S. Department of Energy by Lawrence Livermore
National Laboratory under Contract DE-AC52-07NA27344 and was supported by the LLNL-LDRD Program 
under Project No. 25-FS-026. LLNL-CONF-2010743. 

\section*{Government Use License Notice}

This manuscript has been authored by Lawrence Livermore National Security, LLC under Contract No. DE-AC52-07NA2
7344 with the US. Department of Energy. The United States Government retains, and the publisher, by accepting the
article for publication, acknowledges that the United States Government retains a non-exclusive, paid-up, irrevocable,
world-wide license to publish or reproduce the published form of this manuscript, or allow others to do so, for United
States Government purposes. 
\bibliography{references}

\newpage
\appendix

\section{Extended Note on Necessity of Generalization \#1: Information Set Aggregation}\label{appdx:infostate-agg}

In incomplete information games, information set aggregation (Eqn. \ref{eqn:D4}) is not a trivial condition to omit. 
This reasoning behind this is the following. \textit{All} generative flow network methodologies can be likened to direct-search variants of Hamiltonian MCMC \citep{deleu_generative_2023}. 
Consequently, GFlowNet models target a marginal distribution of the invariant distribution over a DAG: 
specifically, they approximate the terminating state distribution as the target distribution. 
However, this is possible \textit{only if} the boundary conditions over terminal nodes are correctly specified. 
When infostate aggregation is excluded from the boundary conditions, then we are mis-specifying the constraints 
that allow a GFlowNet model to approximate the terminating distribution over game outcomes. 

\section{Two New Cases to Cover in Incomplete vs. Complete Information DAGs for IFlowNets}\label{appdx:new-cases}

\begin{figure}[h]
  \centering
  \includegraphics[width=0.35\linewidth]{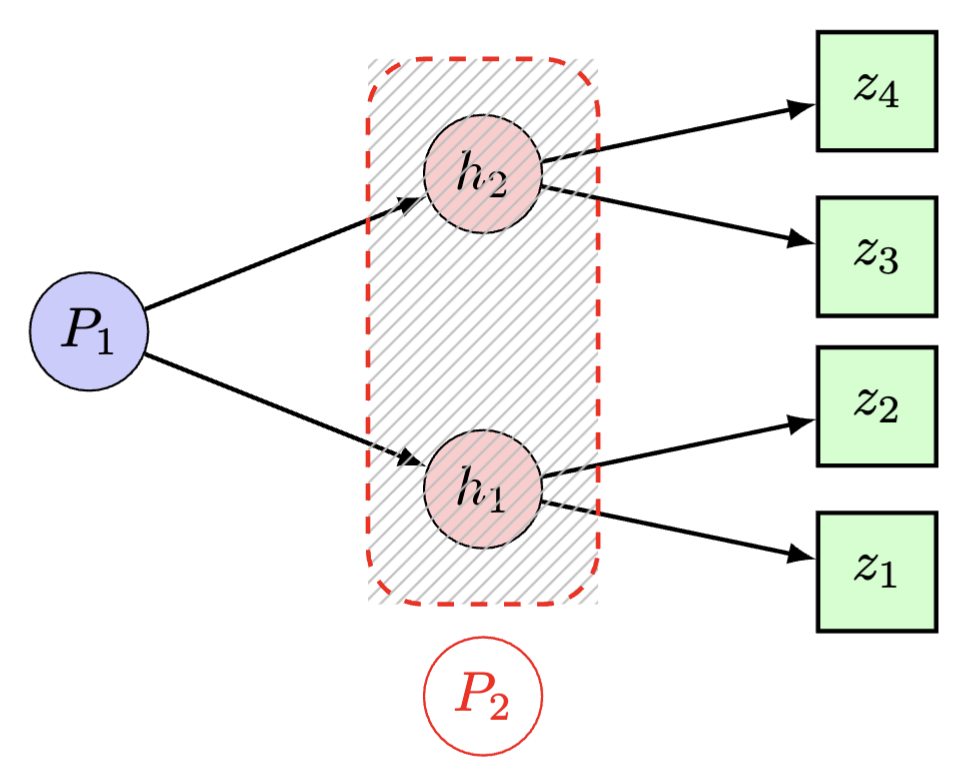}
  \includegraphics[width=0.35\linewidth]{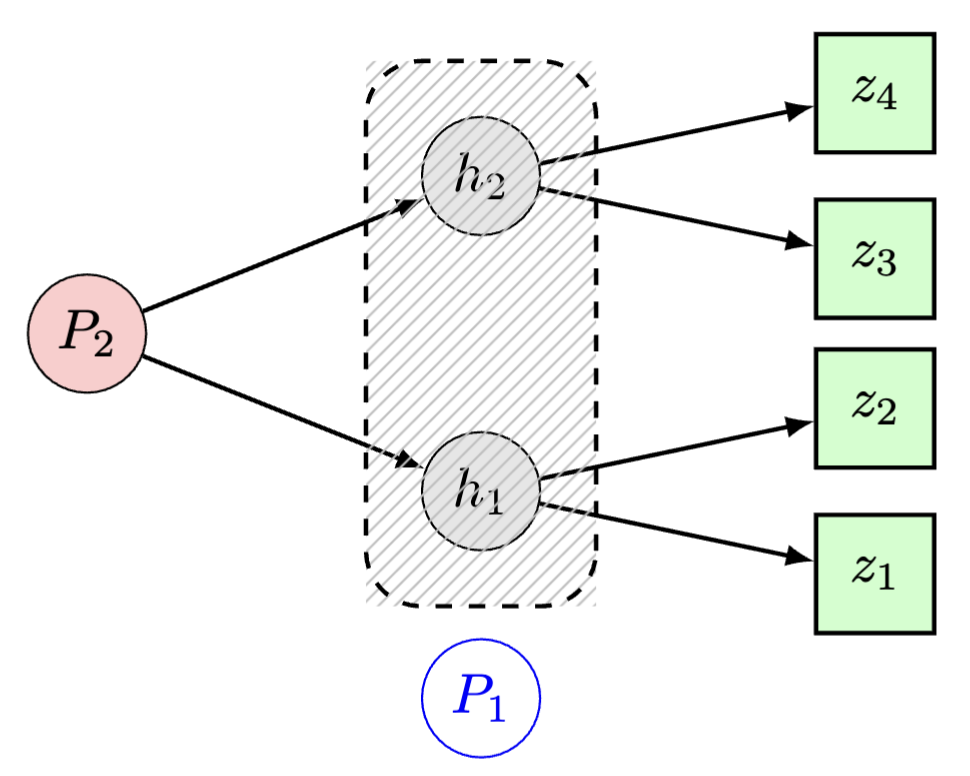}
  \caption{Left: Illustration of why infostate aggregation property necessary. Right: Illustration of higher intra-graph uncertainty in incomplete information games. (See below for details.)}
  \label{fig:case-diagrams}
\end{figure}

\paragraph{Left Scenario of Figure \ref{fig:case-diagrams}:}In the complete information case, $P_2$ could distinguish between $h_1$ and $h_2$. 
Now that $h_1$ and $h_2$ are indistinguishable, they are grayed out; this is because they are in the same infostate, Player $1$ and Player $2$ must take this into account.
The only way this is done in AFlowNet and IFlowNets is through flow constraints over the graph, encoded as the (Generalized) Expected Detailed Balance
constraints.

\paragraph{Right Scenario of Figure \ref{fig:case-diagrams}:} In an incomplete information game, it is possible $P_1$ may find themself at 
an infostate where they cannot distinguish the probability of being in $h_1$ or $h_2$. Therefore, $P_1$ must take this into 
account and pool the total flow mass associated with $h_1$ and $h_2$ in a way that still induces expected reward proportional sampling. 

\section{Review of Expected \& Adversarial Flow Networks}\label{appdx:review}

Adversarial Flow Networks (AFlowNets) are collections of alternating EFlowNets, so we start with EFlowNets. EFlowNets are samplers over a tree $(\G, \State)$, encoded as 
a DAG $\G$ over states $\State$ with an unique initial state $s_0 \in \State$, a set of terminal states $\mathcal{Z}$, 
and a reward function $R$ associated with every $s \in \mathcal{Z}$. Each EFlowNets smooths over transitions it cannot control in the DAG 
by taking expectations, so there is a sampler $i$ for each source of uncertainty. Given sampler $i$, we denote the $i^{\text{th}}$ of $N$ 
EFlowNets by $\mathcal{E}_i := \langle \G, \State, s_0, \mathcal{X}, P_i, R_i \rangle$. We assume for every $\mathcal{E}_i$ that 
$\State$ partitions non-terminal states into \textit{agent states}, denoted by $\State_{i}$, and \textit{environment states}, denoted 
$\State_{env}$ or $\State_{-i}$, where $-i \in [N] \setminus i$, so we may write  
$\State \setminus \mathcal{X} = \State_{\text{agent $i$}} \bigdotcup \hspace{1mm} \State_{\text{Env}} := \State_{i} \bigdotcup \hspace{1mm} \State_{-i}$.
When we are considering the $i^{\text{th}}$ sampler, the other samplers are treated as a product distribution over environment states, i.e., $ P_{Env}(s' \vert s) = P_{-i}(s' \vert s), \forall s' \in Ch(s), \forall s \in \State_{Env}, \forall i \in [N]$. 
 
An \textit{agent policy} is defined to be a collection of distributions over the children of next-possible states, i.e., $P_{agent}(\cdot \vert s)$ 
 for every $s \in \State_{agent}$ (or $P_{i}(\cdot \vert s)$ 
 for every $s \in \State_{i}$). A \textit{forward policy} $P_{F}$ is the forward-sampling distribution, started from $s_0$, determined 
 by alternating all $P_{agent}$ and $P_{env}$. Finally, an AFlowNet is defined with respect to a \textit{collection} of reward functions $R_i$ and a \textit{collection} 
 of all $i$ agents' states $S_i$, $\mathbb{A} := \cup_{i \in [N]} \mathcal{E}_i = \langle \G, s_0, \bigdotcup_{i \in [N]} \hspace{1mm} \State_i, \mathcal{X}, (R_i)_{i \in [N]} \rangle$. 
 In two-player, zero-sum games, AFlowNets are trained by minimizing a log ratio objective constructed from the branch-adjusted trajectory balance (TB) objective
 that is induced by expected detailed balance (EDB) constraints over the graph (summarized by constraints in Eqn.s \ref{eqn:D1}-\ref{eqn:D3}, but with complete information states):

\begin{equation}
    Z_0 \prod_{i: s_i \in \State_{1}} P_{1}(s_{i+1 \vert s_i}) = R_{1}(z)B_{2}(z) \prod_{i: s_i \in \State_{2}} P_{2}(s_{i+1 \vert s_i}),
\end{equation}

where $Z_0$ is a normalizing constant, $R_i$ is the branch-adjusted reward function per-player, and $B_{i}(z) = \prod_{k: s_k \in \State_k} \vert Ch(s_k) \vert$ is agent $i$'s branching factor. 

\section{Quantal Response Equilibria}\label{appdx:qre}

In this section we sketch IFlowNets' connection to quantal response equilibria (QRE). First we note that while 
\cite{jiralerspong_expected_2024} suggest this connection for AFlowNets in complete information 
games, their work only considers QRE in complete information games.

Specifically, we focus on agent QREs (AGREs) \citep{mckelvey_quantal_1998}, which are QREs for extensive-form games like 
those considered in this work (note that QRE for normal form games need not match QREs in extensive-form games). 
AQREs can be viewed as entropy-regularized Nash equilibria, and in incomplete information games, as 
a type of Bayesian equilibrium for a game with perturbed payoffs \citep{sokota_unified_2023}. While we do not empirically or theoretically study AQRE properties of IFlowNets in this work, we 
briefly sketch its relation to AQREs and hope future studies will rigorously analyze these connections. 

Given that IFlowNets reproduce AFlowNet properties for incomplete information games, we know IFlowNet 
agent policies follow $P_i (I_i) \propto \E_{P_{-i}}F_i(I_i)$ (see Lemma \ref{lemma:FM}), so that like \citet{jiralerspong_expected_2024}'s 
AFlowNets, IFlowNets have an \textit{expected flow matching} property over environment dynamics 
and other players strategies, $P_{-i}$. In words, each agent obtained from an IFlowNet will 
sample actions, on average, that are proportional to the learned flow function; following this 
stepwise-sampling as a strategy then implies that IFlowNet players will sample trajectories of play 
that are proportional to expected terminal, branch-adjusted rewards.

However, if all agent policies follow this structure $\forall i$,
then $P_i (I_i) \propto \E_{P_{-i}}F_i(I_i)$ is \textit{some} AQRE \citep{mckelvey_quantal_1998,sokota_unified_2023}. Therefore, if we get to choose the particular branching factor 
$B_i$ and reward $R_i = B_i R^{0}_i$ for the game, then IFlowNets encode the subset of AQREs 
that are expressible by $R_i$. 

For example, suppose an IFlowNet satisfies that $P_i (I_i) \propto \E_{P_{-i}}F_i(I_i)$ and 
$\E_{P_{-i}}F_i(I_i)$ is a logistic function. Then all players are sampling strategies proportional 
to a logistic function of the reward (or utilities), so immediately we know we have estimated some logit AQRE \citep{mckelvey_quantal_1998}. 
The advantage with IFlowNet-based estimation is that we can make this argument for any AQRE 
that is estimable by $\E_{P_{-i}}F_i(I_i)$. Therefore, in principle, one should be able to carefully choose $B_i$ and transformations 
of the reward to find various AQREs. 

\section{Technical Appendices}\label{appdx:technical}


\subsection{Expectation-Based Aggregation Fails to Induce Flow Matching over Infostates}\label{appdx:exp-failure}

\begin{theorem}(\textit{Expectation-Based Aggregation over Infosets via DEDB Invalidates Flow Matching over Infostates})
   
\end{theorem}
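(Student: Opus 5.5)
The plan is to show that if a flow function $F$ satisfies \ref{eqn:D1}--\ref{eqn:D4} and also expected flow matching over infostates, then $P_{env}$ cannot be a probability mass function in general. I will assume all of these conditions and derive a contradiction. A single explicit small game should be enough as a counterexample, so I will build one and track the flows exactly.

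Before building the example, I will derive the general identity. Fix an agent infostate $I \in \mathcal{I}_{agent}$ that contains at least two histories, and fix an action $a$ available at $I$.
\begin{enumerate}
\item Sum \ref{eqn:D3} over $a \in \A(I)$. Since $P_{agent}(\cdot \vert I)$ sums to one, this gives
\begin{equation*}
F(I) = \sum_{a} \E_{h' \sim P_{env}}\big[F(h')\big],
\end{equation*}
where the expectation ranges over the successor histories $h'=(h,a)$, $h\in I$.
\item Flow matching over infostates, combined with \ref{eqn:D4} applied to $I$ and to its children, instead requires
\begin{equation*}
F(I) = \sum_{h \in I} F(h) = \sum_{a}\sum_{h \in I} F((h,a)).
\end{equation*}
\item Equate the two expressions term by term in $a$. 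This is legitimate because \ref{eqn:D3} holds for each $a$ separately and the agent's policy is free. The result is
\begin{equation*}
\sum_{h\in I} P_{env}(h \vert I)\, F((h,a)) = \sum_{h \in I} F((h,a)) \quad \text{for every } a.
\end{equation*}
\end{enumerate}

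Now take the concrete instance. The simplest choice is Player~2's single infostate in the extensive-form RPS+ game from the experiments, with $\vert I\vert = 2$ and $F((h,a)) = R((h,a)) > 0$ obtained via \ref{eqn:D1} at terminal children. Shift rewards to be positive if necessary. The identity from the last step then forces $\sum_{h} \big(1 - P_{env}(h\vert I)\big) F((h,a)) = 0$. Every summand is nonnegative and every $F((h,a))$ is strictly positive, so each $P_{env}(h\vert I)=1$. That gives total mass $\sum_h P_{env}(h \vert I) = \vert I\vert \geq 2$, so $P_{env}(\cdot\vert I)$ is not a probability mass function. More generally, the condition forces the weights onto the child flows to be the counting measure rather than a distribution. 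This contradicts $P_{env}$ being a valid pmf, which is exactly the claim that DEDB is incompatible with flow matching over infostates. I will also remark that \ref{eqn:D2} and \ref{eqn:D4} do not rescue the situation, since neither constrains the weighting inside \ref{eqn:D3}.

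The main obstacle is fixing the semantics of \ref{eqn:D3}: what distribution $h' \sim P_{env}$ ranges over, and whether the flow-matching target is the sum or the average over $h\in I$. I would state explicitly that the expectation is over $h\in I$ weighted by the environment's (belief) distribution, and that infostate flow matching is the sum version induced by \ref{eqn:D4}. Under the alternative averaging reading one gets $F(I)=\vert I\vert^{-1}\sum_h F(h)$, which directly contradicts \ref{eqn:D4} when $\vert I\vert\ge 2$ and $F>0$. Either way a contradiction results, so I would handle both readings in two short cases.
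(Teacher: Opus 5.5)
Your proposal is correct and follows essentially the same route as the paper's proof: sum \ref{eqn:D3} over $a$, compare with the unweighted sum $\sum_a\sum_{h\in I}F((h,a))$ that flow matching plus \ref{eqn:D4} demands, force every $P_{env}(h\vert I)=1$, and conclude the masses total $\vert I\vert$. Your nonnegativity argument (each $(1-P_{env}(h\vert I))F((h,a))\ge 0$ with $F((h,a))>0$) is an improvement, since it justifies the step $\sum(\cdot)=0\Rightarrow P_{env}(h)=1$ that the paper asserts without argument. There are two harmless slips: Player~2's infostate in RPS+ contains three histories, not two; and flow matching only constrains the sum over $a$, so ``equating term by term in $a$'' is not justified by \ref{eqn:D3} alone, but you do not need it, because the same nonnegativity argument applies verbatim to the double sum $\sum_a\sum_{h}(1-P_{env}(h\vert I))F((h,a))=0$.
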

\begin{proof}

Suppose $P_{env}$ is known. Under DEDB, fixing some $a \in \A(I)$

\begin{align}
    F(I) P_{agent}(a\vert I) &= \E_{h \sim P_{env}}[F(h,a)] \\
    \iff F(I) & = \sum_{a \in \A(I)} \sum_{h \in I} P_{env}(h) F(h,a)
\end{align}

But for FM to hold, we need the following to be satisfied:

\begin{align}
    \sum_{a \in \A(I)} \sum_{h \in I} P_{env}(h) F(h,a) &\overset{\text{need}}{=} \sum_{a \in \A(I)} \sum_{h \in I} F(h,a),  \hspace{3mm} \forall (h, a, I) \\
    \iff \sum_{a \in \A(I)} \sum_{h \in I} F(h,a) \Big[P_{env}(h) - 1\Big] &= 0,  \hspace{31mm} \forall (h, a, I) \\
    \iff P_{env}(h) &= 1,  \hspace{31mm} \forall (h, I) \\
    \therefore \sum_{h \in I}P_{env}(h) &= \vert I \vert,\hspace{29mm} \forall (h, I)\\
    \Rightarrow\!\Leftarrow
\end{align}

because $P_{env}$ is a probability distribution that should sum to $1$ for any $h$ or $I$. Therefore, re-applying expectation-based aggregation via the DEDB constraints cannot produce the FM property over infosets. 
    
\end{proof}

\subsection{Infoset Flow Matching}\label{appdx:infoset-fm}

\begin{lemma}(\textit{Generalized Expected Detailed Balance Conditions Imply Flow Matching over Infosets.})\label{lemma:FM} 

\end{lemma}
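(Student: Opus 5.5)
The plan is to verify flow matching, $F(I) = \sum_{I' \in \mathrm{Ch}(I)} F(I')$ (for agent infosets) and its expected version for environment infosets, by splitting into the two kinds of infostates and using the GEDB constraints directly. The flows of child infosets will be identified through the aggregation constraint \ref{eqn:C4}. Throughout I read $h' \ni a$ in \ref{eqn:C3} as the set of child histories $(h,a)$ with $h \in I$, which is the natural reading under perfect recall.

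\textbf{Agent infosets.} Fix $I \in \I_{agent}$ and sum \ref{eqn:C3} over $a \in \A(I)$. Since $P_{agent}(\cdot \vert I)$ is a probability mass function on $\A(I)$, the left side collapses to $F(I)$. The right side becomes
\[
\sum_{a\in\A(I)}\sum_{h\in I}F(h,a).
\]
This double sum is over all child histories of $I$. By perfect recall, the children $(h,a)$ partition into the child infosets $I' \in \mathrm{Ch}(I)$, and each $(h,a)$ belongs to exactly one such $I'$. We then regroup the double sum by child infoset and apply \ref{eqn:C4} to each group, which gives $\sum_{I' \in \mathrm{Ch}(I)} F(I')$. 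This is exactly flow matching over infosets, and it also shows $P_{agent}(a \vert I) = \sum_{h \in I} F(h,a)/F(I)$, so that agent actions are sampled proportionally to outflow. Unlike the computation in Theorem~\ref{thm:dedb-fails}, no environment weights $P_{env}(h)$ appear, so the contradiction $\sum_h P_{env}(h) = \vert I \vert$ cannot arise.

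\textbf{Environment infosets.} For $I \in \I_{env}$, \ref{eqn:C2} already states $F(I) = \E_{I' \sim P_{env}(\cdot\vert I)}[F(I')]$. This is the expected flow matching condition, with the expectation over the other players and chance. \ref{eqn:C4} again lets us express each $F(I')$ through its histories, so it is consistent with the agent case. The boundary condition \ref{eqn:C1} anchors the recursion at terminals.

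\textbf{Concluding.} Inducting backward from $\Z$ over the DAG, with \ref{eqn:C1} as the base case, the resulting forward sampler yields terminal probabilities proportional to expected reward. Composing the agent-step ratios and the environment-step expectations gives a telescoping product, as in the trajectory-balance derivation for EFlowNets.

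\textbf{Main obstacle.} The hardest step is the regrouping in the agent case. It requires that child histories of $I$ are not shared with other infosets and that each child infoset's histories all come from $I$. This needs perfect recall, and it needs the sum in \ref{eqn:C3} to range precisely over $\{(h,a): h \in I\}$. I would state this reading and the perfect-recall partition as an explicit preliminary claim before doing the sum interchange.
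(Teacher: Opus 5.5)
Your overall route matches the paper's: sum \ref{eqn:C3} over $a\in\A(I)$, use $\sum_a P_{agent}(a\vert I)=1$ to reduce the left side to $F(I)$, and identify $\sum_{a}\sum_{h\in I}F(h,a)$ with $\sum_{I'\in \mathrm{Ch}(I)}F(I')$. Your environment case is just \ref{eqn:C2} restated. Your closing paragraph on backward induction and telescoping goes beyond this lemma; the paper defers that to a Proposition-5-type argument. So the step that carries the proof is the regrouping you flag as the main obstacle, and there the justification you propose would fail.

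Suppose $\mathrm{Ch}(I)$ means the information sets, in the partition to which \ref{eqn:C4} literally applies, that contain the child histories $(h,a)$. Then perfect recall does not force those sets to consist only of children of $I$. Perfect recall constrains a player's \emph{own} partition, whereas the histories $(h,a)$ are typically decision points of the next mover. In Kuhn poker, the children of player~1's infoset ``holds J'' are $(J,Q,\cdot)$ and $(J,K,\cdot)$. Player~2's infoset ``holds Q, facing a bet'' is $\{(J,Q,\mathrm{bet}),(K,Q,\mathrm{bet})\}$, which also contains a child of player~1's ``holds K'' infoset. Summing \ref{eqn:C4} over such $I'$ therefore picks up flow from histories not below $I$. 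So the preliminary perfect-recall partition claim you plan to state is false in general, even though Kuhn has perfect recall.

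The paper avoids this by making the regrouping definitional (its ``def.'' steps). The children of $I$ are the action-indexed sets $I'_a=\{(h,a):h\in I\}$, i.e.\ the acting player's own post-action infostates, and $F(I'_a)=\sum_{h\in I}F(h,a)$ holds by aggregation. With that identification the regrouping is a tautology and perfect recall plays no role. You should state this definition of $\mathrm{Ch}(I)$ in place of the perfect-recall claim.
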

\begin{proof}
    Suppose \ref{eqn:C1} through \ref{eqn:C4} hold. Then, focusing on \ref{eqn:C3},
    \begin{align}
        F(I) P_{agent}(a \vert I) & = \sum_{h' \ni a} F(h') \\
        \iff F(I) &=\sum_{a \in \mathcal{A}(I)} \sum_{h' \ni a} F(h') \\
        & \overset{\text{def.}}{=} \sum_{a \in \mathcal{A}(I)} \sum_{(h,a) \ni a} F(h,a) \\
        & \overset{\text{def.}}{=} \sum_{I' \in Ch(I)}F(I') \\ 
        \therefore F(I) &= \sum_{I' \in Ch(I)}F(I')
    \end{align}

Hence the result. 
\end{proof}

We note that because the GEDB repair and recover the EDB conditions with infoset-based 
flow-matching, one can apply similar arguments as Proposition $5$ in \citet{jiralerspong_expected_2024}. After reparameterizing to 
infostates, we have that $P_i \propto F_i$, for each agent $i$, and then by following the argument for 
Proposition $5$ using the GEDB, we inherit the same expected flow matching property (with the expectation taken 
over other players IFlowNet strategies and environment dynamics).

\subsection{GEDB Constraints Generalize the EDB Constraints}\label{appdx:gedb-edb}

\begin{lemma}(\textit{The GEDB Constraints Recover the EDB Constraints in Complete Information Settings.})\label{lemma:gedb_generalize_edb} 

\end{lemma}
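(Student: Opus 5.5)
The plan is to specialize the GEDB constraints \ref{eqn:C1}--\ref{eqn:C4} to a complete information game and check that each one reduces to the corresponding EDB constraint of \citet{jiralerspong_expected_2024}. These are \ref{eqn:D1}--\ref{eqn:D3} read over ordinary states $s \in \State$.

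\textbf{Formalizing complete information.} In a complete information game every information partition $\I_i$ (and $\I_{env}$) is the partition of $\Hist \setminus \Z$ into singletons. Each infostate is then $I = \{h\}$ for a unique history $h$, and we identify $h$ with the state $s \in \State$. Under perfect recall this identification is a bijection $\I \to \State\setminus\Z$. It preserves the child relation, $\text{Ch}(\{h\}) = \{\{(h,a)\} : a \in \A(h)\}$, and the agent/environment split, $\State_i \leftrightarrow \I_i$. This reparameterization is the only real input. I would state it first as a short claim and then check the constraints one by one.

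\textbf{Checking the constraints.} Constraint \ref{eqn:C1} is literally the same in both settings, since terminal histories $z$ are unaffected. Constraint \ref{eqn:C4} becomes $F(\{h\}) = F(h)$, which is a tautology. So it imposes nothing new and simply licenses writing $F(s)$ for $F(I)$. Under this identification, \ref{eqn:C2} reads $F(s) = \E_{s' \sim P_{env}(\cdot\vert s)}[F(s')]$ for $s \in \State_{env}$, which is the EDB environment constraint. For \ref{eqn:C3}, fix an agent state $s = \{h\}$ and an action $a$. The sum $\sum_{h' \ni a} F(h')$ ranges over children $(\tilde h, a)$ with $\tilde h \in I = \{h\}$, so it collapses to the single term $F((h,a)) = F(s')$, where $s'$ is the child reached by $a$. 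Hence \ref{eqn:C3} becomes $F(s)P_{agent}(s'\vert s) = F(s')$, which is the EDB agent detailed-balance constraint, with actions identified with child states as in the EFlowNet convention.

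\textbf{Completing the equivalence.} For the converse, I would verify that any flow satisfying the EDB constraints, extended by $F(\{s\}) := F(s)$, satisfies \ref{eqn:C1}--\ref{eqn:C4}. This follows by reading the same identities backwards. It shows the two constraint sets define the same admissible flows, not merely that GEDB implies EDB. As a consistency check, I would note that Lemma \ref{lemma:FM} specializes to ordinary flow matching $F(s) = \sum_{s'\in\text{Ch}(s)}F(s')$ at agent states, which matches the consequence of EDB.

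The main obstacle is the bookkeeping in \ref{eqn:C3}. The paper's notation $\sum_{h'\ni a}$ has to be interpreted precisely as the sum over histories $(\tilde h,a)$ with $\tilde h\in I$. I would also need to justify that distinct actions from a singleton infostate lead to distinct child states, so that exactly one term survives. I would handle this by fixing that interpretation explicitly, consistent with the proof of Lemma \ref{lemma:FM}, before specializing.
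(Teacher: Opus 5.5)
Your proposal is correct and follows essentially the same route as the paper's proof: specialize to singleton infosets, observe that \ref{eqn:C1} is unchanged and \ref{eqn:C4} is a tautology, and collapse \ref{eqn:C2} and the sum in \ref{eqn:C3} to single terms (using $s'=(s,a)$) to obtain \ref{eqn:E2} and \ref{eqn:E3}. The only difference is that you also sketch the converse (an EDB flow extended by $F(\{s\}) := F(s)$ satisfies the GEDB), which the paper does not argue, so your version shows the two constraint sets are equivalent rather than only that the GEDB reduce to the EDB.
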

\begin{proof}
   Suppose \ref{eqn:C1} through \ref{eqn:C4} hold. We want to show that \ref{eqn:C1} through \ref{eqn:C4} correspond to and recover the following constraints.

    \begin{align}
        F(z) &= R(z)\label{eqn:E1}\tag{E1}, \hspace{5mm} (\forall z \in \mathcal{Z}) \\ 
        F(s) &= \E_{s' \sim P_{env}(\cdot \vert s)} \Big [ F(s') \Big]\label{eqn:E2}\tag{E2}, \hspace{5mm} ( \forall s, s' \in \mathcal{S}) \\
        F(s) P_{agent}(s' \vert s) & =  F(s')\label{eqn:E3}\tag{E3}, \hspace{5mm} ( \forall s, s' \in \mathcal{S}) 
    \end{align}

\noindent First note that we leave out a condition corresponding to \ref{eqn:C4}, as it is vacuously true when all infosets are singletons: $F(s) = \sum_{s \in s} F(s)$, which is a tautology. Next, \ref{eqn:C1} and \ref{eqn:E1} automatically correspond to one-another, as the flow of any terminal is always taken to be the reward achieved at that terminal node. Now we focus on \ref{eqn:C2} and \ref{eqn:C3}. Suppose $\I = \State$ and consider \ref{eqn:C2}. Then we know $I = s$ for some $s$ and $I' = s'$ for some $s'$.

\begin{align}
    \implies F(I) = F(s) = \E_{I' \sim P_{\cdot \vert s}}[F(I')] = \E_{s' \sim P_{\cdot \vert s}}[F(s')] \\
    \therefore F(s) = \E_{s' \sim P_{\cdot \vert s}}[F(s')],
\end{align}

\noindent recovering \ref{eqn:E2}. Finally suppose $\I = \State$ and consider \ref{eqn:C3}. 

\begin{align}
    F(I)P_{agent}(a \vert I) &= \sum_{h' \ni a} F(h') \\
    \implies F(s)P_{agent}(a \vert s) &= \sum_{s' \ni a} F(s') = F(s')
\end{align}

\noindent The implication follows by definition definition, as each $s$ or $s'$ is a singleton, so the summation on the right-hand side is over a single node. Finally, $s' = (s,a)$ by construction, so 
\begin{align}
    \implies F(s)P_{agent}(a \vert s) &= F(s)P_{agent}(s' \vert s) = F(s') \\ 
    \therefore  F(s)P_{agent}(a \vert s) &= F(s'),
\end{align}
\noindent which proves \ref{eqn:C3} implies \ref{eqn:E3}. Therefore, \ref{eqn:C1} through \ref{eqn:C4} recover and generalize \ref{eqn:E1} through \ref{eqn:E3}, as required. 
\end{proof}

\newpage

\subsection{IFlowNets and Optimality Criteria}

In this section, we step through the optimality criteria and desiderata outlined by \citet{jiralerspong_expected_2024} to show that the GEDB satisfy them. For completeness, we step through the proofs, but the primary differences are in how to handle the re-parameterized infosets and making sure that the original constraints satisfy flow matching (FM). We start by considering the case when there is $n=1$ player vs. the environment, or a Nature player.

Note that once our GEDB are shown to be valid, any sampler satisfying them also satisfies expected reward proportional sampling, under the dynamics 
of the game and other players, immediately due to Proposition $5$ under the GEDB constraints on infostates \citep{jiralerspong_expected_2024}.

\begin{theorem}(\textit{There exists an unique $(F, P_{agent})$ pair satisfying GEDB for $n=1$})\label{thm:unique-single-policy}

\begin{proof}
    In analogy to the EFlowNets case of AFlowNets, consider a single agent navigating an incomplete information game tree $G$ by alternating moves with a Nature player, encoded by $P_{env}$. Then the two distributions characterizing $P_F$, the forward policy, are $P_{agent}$, the agent's policy, and $P_{env}$, the environment dynamics. 

In parallel to \citet{jiralerspong_expected_2024}'s Proposition $1$, proving this relies on two characteristics: a recurrence on the flow function $F$ induced by the GEDB and flow matching being satisfied by the full forward policy. By the hypothesis, $F$ satisfies the GEDB below. 

\begin{equation}
    F(I)=  \begin{cases}

    \sum_{I' \in Ch(I)} F(I'), & I \in \I_{agent} \\

     \E_{I' \sim P_{env}(\cdot \vert I)}\Big[F(I')\Big] & I \in \I_{env}\\
     
     \sum_{h \in I}F(h), & I \in \I_{agent} \\

     R(I) & I \in \mathcal{Z} 

\end{cases}
\end{equation}

where we write constraint \ref{eqn:C3} as a sum in the first line of the piecewise function due to Lemma \ref{lemma:FM}. The proof proceeds similarly to Proposition $1$ in \citet{jiralerspong_expected_2024} as long as it satisfies the key requirement of flow matching, and by Lemma \ref{lemma:FM}, the GEDB induce flow matching on infosets. Consequently, as in Proposition $5$, we may once again consider the longest trajectory over the DAG, and it satisfies a recurrence induced by the GEDB. 

\remark Note that in the DEDB constraints, even if a flow function $F$ satisfies DEDB, the flow function is \textit{not} unique exactly because it cannot satisfy flow matching. Consequently, even if one constructs a policy $P_{agent}^{DEDB}$ from a flow function $F^{DEDB}$, there is no unique $(F^{DEDB}, P_{agent}^{DEDB})$ pair satisfying DEDB. 

Next, because we know $\exists! F$ satisfying GEDB, the uniqueness of $P_{agent}$ follows in an argument similar to Proposition $1$ in \citet{jiralerspong_expected_2024}: by construction, $R(z) > 0$ and $F(I) > 0, \forall I$, and the GEDB define a positivity-preserving recurrence. Consequently, we may construct the agent policy from the unique flow function as $P_{agent} \overset{\Delta}{=} \frac{F(I')}{F(I)}$. Now, because $F$ is unique and satisfies GEDB, we also have that 

\begin{equation}
    F(I) P_{agent}(a \vert I) = F(I) \frac{F(I')}{F(I)}  = F(I')
\end{equation}

Our proof departs at this point, and now, because of infoset consistency and the fact that $h' \overset{\text{def.}}{=}(h,a)$ in incomplete information games, we have 

\begin{equation}
    F(I') = \sum_{(h,a) \ni a} F(h, a) = \sum_{h' \ni a}F(h')
\end{equation}

Therefore, we have that 

\begin{equation}
    F(I) P_{agent}(a \vert I) = F(I') = \sum_{h' \ni a}F(h')
\end{equation}

as desired. Therefore, $\exists!(F, P_{agent})$ satisfying GEDB for $n=1$ (but not the DEDB). 

\end{proof}
\end{theorem}

\begin{theorem}(\textit{For all players $i \in [n]$, there exist unique flow functions and policies $(F_i, P_i)$ that satisfy GEDB with respect to their generalized EFlowNet.})

\end{theorem}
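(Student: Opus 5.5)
The plan is to prove the statement \emph{jointly}: all $n$ pairs $(F_i,P_i)_{i\in[n]}$ are constructed at once by backward induction over the game DAG. Following the review in Appendix \ref{appdx:review}, player $i$'s generalized EFlowNet $\mathcal{E}_i$ treats every infoset not owned by $i$ as an environment infoset. Its transition law there is $P_{env}=P_{-i}$, the product of the other players' policies and Nature. The difficulty is that $P_{-i}$ is not fixed in advance; it is itself produced by the other players' flows through (C3). I will show this circularity is only apparent, because every quantity at a node depends only on quantities strictly below it.

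\textbf{Step 1: the ordering.} Define the height of a history as the length of the longest path from it to $\Z$. Define the height of an infoset $I$ as the maximum height over $h\in I$. Under perfect recall, every child infoset of $I$ (for any player) consists of histories extending histories of $I$, so it has strictly smaller height. Hence both the history relation and the infoset successor relation are acyclic and graded by height. The induction hypothesis $H(d)$ is:
\begin{itemize}
\item for every player $i$, $F_i(h)$ is uniquely determined and positive for all $h$ of height $<d$;
\item $F_i(I)$ is uniquely determined and positive for all infosets $I$ of height $<d$;
\item every policy $P_j(\cdot\mid I)$ at an infoset $I$ of height $<d$ owned by player $j$ is uniquely determined.
\end{itemize}
The base case is height $0$, i.e.\ terminal nodes. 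There (C1) forces $F_i(z)=R_i(z)>0$ for every $i$, with no freedom.

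\textbf{Step 2: the inductive step.} Take a node $h$ of height $d$, lying in an infoset $I$ of player $j$.
\begin{itemize}
\item \emph{Player $j$'s own flows.} Lemma \ref{lemma:FM} gives $F_j(I)=\sum_{I'\in Ch(I)}F_j(I')$. By (C4), every summand is a sum of $F_j(h,a)$ over histories of height $<d$, so it is already fixed. This determines $F_j(I)$, and then $P_j(a\mid I)=\sum_{h'\ni a}F_j(h')/F_j(I)$ is forced by (C3). This is a genuine probability distribution by flow matching and positivity, exactly as in Theorem \ref{thm:unique-single-policy}.
\item \emph{Every other player's flows.} For each $i\neq j$, the node $h$ is an environment node of $\mathcal{E}_i$, so (C2) sets $F_i(h)=\E_{a\sim P_j(\cdot\mid I)}[F_i(h,a)]$. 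The law $P_j(\cdot\mid I)$ was just fixed, and the values $F_i(h,a)$ are fixed by $H(d)$. Nature nodes are handled the same way with the known chance law.
\item \emph{Infoset flows of height $d$.} These are then fixed by (C4), which gives $H(d+1)$.
\end{itemize}
Positivity propagates throughout: sums and convex combinations of positive numbers are positive, which is the positivity-preserving recurrence already used in Theorem \ref{thm:unique-single-policy}.

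\textbf{Step 3: conclusion.} Existence holds because the recursion defines every $F_i$ and $P_i$ on the whole DAG, and by construction they satisfy (C1)--(C4). Uniqueness holds because each step was forced: any other family satisfying GEDB for all players must agree with this one at height $0$, hence at every height. Restricting to a single $i$ and viewing the now-fixed $P_{-i}$ as $P_{env}$ recovers exactly the setting of Theorem \ref{thm:unique-single-policy}. So each $(F_i,P_i)$ is the unique pair for $\mathcal{E}_i$, which is consistent with that theorem.

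\textbf{Main obstacle.} The delicate point is the ordering claim in Step 1 when infosets contain histories of different lengths. I must check that computing $F_j(I)$ never requires a flow at the same or greater height. I would verify this from perfect recall: histories in $I$ share player $j$'s action and observation sequence, so every child $(h,a)$ of every $h\in I$ lies strictly below $I$, and no infoset can be its own descendant. If that fails for some exotic game, the fallback is to induct on the infoset forest of the acting player rather than on global height.
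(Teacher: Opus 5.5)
\textbf{The gap: the acting player's own history-level flows are never determined.} At $h\in I$ with $I\in\I_j$, your step fixes three things: $F_j(I)$, $P_j(\cdot\mid I)$, and $F_i(h)$ for $i\neq j$ (through your history-level reading of (C2)). But (C3) and (C4) constrain the values $F_j(h)$, $h\in I$, only through their sum $\sum_{h\in I}F_j(h)=F_j(I)$. (C4) passes from histories to infosets, not back. So the first clause of $H(d+1)$ is never established for $i=j$.

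That clause is needed. Whenever the parent $g$ of $h$ is an environment node for $j$, your (C2) reads $F_j(g)=\sum_a P_k(a\mid J)\,F_j(g,a)$ with $h=(g,a)$, and this uses the individual values $F_j(g,a)$. In an alternating game this is the generic situation.

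Under your reading, uniqueness is in fact false. Take this game:
\begin{itemize}
\item At the root $r$, player $1$ chooses between $Y$ (terminal, reward $R_Y$) and $X$.
\item After $X$, Nature draws $c\in\{1,2\}$ with probabilities $p\neq 1/2$ and $1-p$.
\item Player $1$, not observing $c$, then picks $a\in\{L,R\}$ at $I=\{(X,1),(X,2)\}$ and receives $R(c,a)$.
\end{itemize}
(C1) and (C3) force $F(I)=\sum_{c,a}R(c,a)$ and $P_1(a\mid I)\propto R(1,a)+R(2,a)$. (C4) only asks that $F(X,1)+F(X,2)=F(I)$, so every positive split satisfies (C1)--(C4). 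Yet $F(X)=pF(X,1)+(1-p)F(X,2)$, and hence $P_1(X\mid r)=F(X)/(F(X)+R_Y)$, sweeps out an interval as the split varies.

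\textbf{How the paper avoids this, and how to repair your argument.} The paper runs its recurrence on infosets only. For $I\in\I_j$ it writes $F_i(I)=\sum_{I'\in Ch(I)}F_i(I')F_j(I')/\sum_{I'\in Ch(I)}F_j(I')$. It uses Lemma~\ref{lemma:FM} for $F_j$ to turn this into $\E_{I'\sim P_j(\cdot\mid I)}[F_i(I')]$ with $P_j(I'\mid I)=F_j(I')/F_j(I)$, and then appeals to Theorem~\ref{thm:unique-single-policy} with $P_{env}=P_j$. Only aggregated flows are ever propagated.

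Your joint induction usefully makes explicit the mutual dependence of the $F_i$ and $P_j$ that this reduction takes for granted. To close it, you can do either of the following:
\begin{itemize}
\item run it on the paper's infoset-level recurrence; or
\item impose history-level flow matching $F_j(h)=\sum_a F_j(h,a)$ at each player's own nodes. This is compatible with (C3)--(C4), but it is an additional constraint, so uniqueness then holds only relative to it.
\end{itemize}

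\textbf{The ordering obstacle is not settled by perfect recall.} Suppose Nature decides whether $j$ or $k$ moves first, and neither player observes Nature's choice or the other's move. The game has perfect recall, but $P_j(\cdot\mid I)$ depends on $P_k(\cdot\mid J)$ and vice versa. No induction on one player's infoset forest breaks that cycle. You need the paper's alternating-moves (timeability) assumption, and you should induct on decreasing depth, settling all policies at a given depth before the environment-node flows there.
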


\begin{proof}
    We want to show that $\exists!(F_{i}, P_{i})_{i \in [n]}$, $n>1$, satisfying the GEDB, where each player $P_i$ in the game alternates moves and independently plays their local policy $P_i$ against the unknown joint policy of all other players, $P_{-i}$, encoded as $P_{env}$ in Theorem \ref{thm:unique-single-policy}. Here, we will refer to $P_{-i}$ as $P_{j}$, $j \neq i$. 
    
    Similar to Theorem \ref{thm:unique-single-policy}, we start with the recurrence on the flow function $F_i$ induced by the GEDB.

    \begin{equation}
    F_{i}(I)=  \begin{cases}

    \sum_{I' \in Ch(I)} F_{i}(I'), & I \in \I_{i} \\

     \frac{\sum_{I' \in Ch(I)}F_{i}(I')F_{j}(I')}{\sum_{I' \in Ch(I)}F_{j}(I')} & I \in \I_{j}, j \neq i\\
     
     \sum_{h \in I}F_{i}(h), & I \in \I_{i} \\

     R_{i}(I) & I \in \mathcal{Z} 
\end{cases}
\end{equation}
By Theorem \ref{thm:unique-single-policy}, we know that this recurrence induces an unique $(F_i, P_i)$ pair, but we need to check that it satisfies the $n$-player recurrence here. The first case goes through by Lemma \ref{lemma:FM} and the third and fourth cases go through by construction, so we need to verify the second case (which encodes GEDB constraint \ref{eqn:C2}) holds. However, this is equivalent to the same EDB condition as in \citet{jiralerspong_expected_2024}'s Proposition $2$ over a re-parameterized state space $\I$, 
so the same calculation goes through if and only if flow matching holds. However, because the GEDB satisfies flow matching (see Lemma \ref{lemma:FM}), we have 
\begin{equation}
    \frac{\sum_{I' \in Ch(I)}F_{i}(I')F_{j}(I')}{\sum_{I' \in Ch(I)}F_{j}(I')} = \sum_{I' \in Ch(I)} \frac{F_{i}(I') F_{j}(I')}{\sum_{I'' \in Ch(I)}F_{j}(I'')} = \sum_{I' \in Ch(I)} F_{i}(I) P_{j}(I' \vert I) = \E_{I' \sim P_{j}(\cdot \vert I)}\Big[F(I')\Big]
\end{equation}

As desired, this reproduces the recurrence we had in Theorem \ref{thm:unique-single-policy}.
    
\end{proof}

\newpage
\begin{theorem}(\textit{In $n=2$ player zero-sum IFNs, if agent policies $(P_1,P_2)$ and flows $(F_1,F_2)$ jointly satisfy existence and uniqueness with respect to the GEDB, then $F(I) \overset{\Delta}{=} F_{1}(I)F_{2}(I)$ satisfies flow matching with respect to the joint reward $R(x) \overset{\Delta}{=} R_{1}(x)R_{2}(x)$.})
    
\end{theorem}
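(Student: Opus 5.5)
The plan follows the analogous Proposition in \citet{jiralerspong_expected_2024} for AFlowNets, with states replaced by infosets. Set $F(I) := F_1(I)F_2(I)$ and $R := R_1R_2$. We must show two things: $F(z) = R(z)$ on terminals, and $F(I) = \sum_{I' \in Ch(I)} F(I')$ at every non-terminal infoset. The terminal condition is immediate from \ref{eqn:C1} applied to each player, since $F_1(z)F_2(z) = R_1(z)R_2(z)$.

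For flow matching, split non-terminal infosets by which player acts; by symmetry it suffices to treat $I \in \I_1$. We use the recurrence from the preceding theorem for both players at the same node.
\begin{itemize}
  \item For player $1$ (the acting player), Lemma \ref{lemma:FM} gives $F_1(I) = \sum_{I' \in Ch(I)} F_1(I')$, so $P_1(I' \vert I) = F_1(I')/F_1(I)$.
  \item For player $2$ (for whom $I$ is an environment node), the second case of that recurrence gives
  \begin{equation*}
    F_2(I) = \frac{\sum_{I' \in Ch(I)} F_2(I')F_1(I')}{\sum_{I' \in Ch(I)} F_1(I')}.
  \end{equation*}
\end{itemize}
Multiply the two expressions. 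The denominator $\sum_{I'} F_1(I')$ in $F_2(I)$ equals $F_1(I)$ and cancels, leaving
\begin{equation*}
  F_1(I)F_2(I) = \sum_{I' \in Ch(I)} F_1(I')F_2(I'),
\end{equation*}
which is flow matching for $F$. The case $I \in \I_2$ is identical with the roles of the players exchanged. Existence and uniqueness of $(F_i,P_i)$ from the previous two theorems ensure these objects are well defined, and positivity of $F_i$ ensures the divisions are legitimate.

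The main obstacle is that $F_1$ and $F_2$ must be evaluated on a common family of nodes. Player $1$'s infoset partition $\I_1$ differs from $\I_2$, so "$F_2(I)$ for $I \in \I_1$" has to be defined. I would define it through the aggregation constraint \ref{eqn:C4}, $F_2(I) := \sum_{h \in I} F_2(h)$, and do the product and flow-matching computation at the level of histories $h$. Concretely:
\begin{enumerate}
  \item Show that $F_1(h)F_2(h)$ satisfies history-level flow matching. This uses \ref{eqn:C3} summed over $a$ for the acting player, and the expectation recurrence for the other player, exactly as in the display above.
  \item Check that \ref{eqn:C4} is compatible with the product, i.e.\ that the product flow aggregates consistently up to infosets.
\end{enumerate}
This last compatibility is the delicate point, since $\sum_h F_1F_2 \neq (\sum_h F_1)(\sum_h F_2)$ in general. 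I would state the result on the history DAG, or take the joint flow over infosets to be $\sum_{h\in I}F_1(h)F_2(h)$, and note that this reduces to $F_1(I)F_2(I)$ when infosets are singletons, consistent with Lemma \ref{lemma:gedb_generalize_edb}.
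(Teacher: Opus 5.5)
Your central computation is the paper's proof essentially verbatim. For $I\in\I_1$ the paper writes $F_2(I)=\E_{I'\sim P_1(\cdot\vert I)}[F_2(I')]$ with $P_1(I'\vert I)=F_1(I')/F_1(I)$, multiplies by $F_1(I)$ and cancels, and it handles terminals with \ref{eqn:C1}. The ``main obstacle'' you raise is a fair criticism of the setup, but the paper does not resolve it through \ref{eqn:C4}. It implicitly works on one re-parameterized DAG of infosets $\I$ with a common child relation $Ch(\cdot)$. There, $F_2$ on $\I_1$ is \emph{defined} by the expectation case of the $n$-player recurrence, which is exactly the case your second bullet invokes. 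The paper imposes \ref{eqn:C4} only on a player's own infosets, so you should not redefine $F_2(I):=\sum_{h\in I}F_2(h)$ for $I\in\I_1$. Once you take $F_2$ on $\I_1$ from the recurrence, your display already proves the statement as written, within the paper's framework.

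The history-level repair is where your plan breaks. At a history $h\in I\in\I_1$, the expectation recurrence for player $2$ gives $F_1(h)F_2(h)=\sum_a F_1(h)P_1(a\vert I)F_2(h,a)$. Step~1 therefore needs the per-history balance $F_1(h)P_1(a\vert I)=F_1(h,a)$ for every $h\in I$. But \ref{eqn:C3} only asserts $F_1(I)P_1(a\vert I)=\sum_{h\in I}F_1(h,a)$, an identity aggregated over $h\in I$. That identity is compatible with $F_1(h_1,a)/F_1(h_1)\neq F_1(h_2,a)/F_1(h_2)$ for $h_1,h_2\in I$. Summing it over $a$ gives only infoset-level flow matching for $F_1$, never history-level flow matching for $F_1$ or for the product. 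Because $F_2(h,a)$ varies with $h$, you cannot substitute the aggregated identity into the history-level sum. Finally, replacing $F_1(I)F_2(I)$ by $\sum_{h\in I}F_1(h)F_2(h)$, or moving the claim to the history DAG, would prove a different theorem from the one stated. So drop that route rather than adopt it as your final argument.
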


\begin{proof}

    Following \citet{jiralerspong_expected_2024}, by construction $F_1(z) = R_1(z)$ and $F_2(z) = R_2(z)$ for all $z \in Z$, so $F(z) \overset{\text{def.}}{=}F_1(z)F_2(z) \overset{\text{\ref{eqn:C1}}}{=}R_1(z)R_2(z)$ for all $z \in Z$.

    We now show that flow matching holds when $I \in \I \setminus Z$. WLOG, suppose $I \in \I_1$. Then we have
    \begin{align}
        F(I) &= F_1(I)F_2(I) = F_1(I) \E_{I' \sim P_{1}(I' \vert I)}F_2(I') = F_1(I) \sum_{I' \in Ch(I)}\frac{F_1(I')}{F_1(I)}F_2(I') = \sum_{I' \in Ch(I)} F_{1}(I')F_2(I') \\ 
        &\overset{\text{def.}}{=} \sum_{I' \in Ch(I)}F(I')
    \end{align}

    So the joint flow function satisfies flow matching if their component flows satisfy the GEDB, as required. 
    
\end{proof}

\newpage

    


    





\newpage
\section{Implementation Details \& Extra Figures}\label{appdx:implementation}

For each environment, we parameterized agent policies as independent MLPs and used the training objective below by resampling trajectories from an online replay 
buffer. For $RPS+$ agents used a $3$-layer MLP with $32$ hiddent units, and for Kuhn and Leduc Poker, agents used a $3$-layer MLP with $512$ hidden units. To find final settings, we ran simulation studies using grid search over ranges of values
described by \citet{keshavarzi_comparative_2025} and picked the model with the best exploitability. For comparisons, we then ran all models for 
10,000 of the model-specific training iterations and recorded exploitability and iterations per second.  We use PyTorch and the default settings 
with the Adam optimizer.
All environments were available through the OpenSpiel Python library (\cite{lanctot_openspiel_2020}). In all environments, we work with 
the trajectory balance objective over infostates, which is valid by the flow matching property of the GEDB constraints \citep{madan_learning_nodate}.

Suppose we have a trajectory starting at an initial state $h_0$ or infostate $I_0$ (or batch of trajectories) $\tau$, such that each infostate alternates as players alternate moves until 
players reach the end of the game $z$. At the end of the game, players receive zero-sum rewards $R^{0}_{1}(z):= \exp(u_{1}(z))$ and $R^{0}_{2}(z):=\exp(u_{2}(z))$, where we take $\exp(\cdot)$
because generative flow network models assume positive rewards, and because we will use a log-transformed objective \citep{bengio_gflownet_nodate}. Following \citet{jiralerspong_expected_2024},
we use a branch-adjustment factor, $B_{i}(z) = \prod_{j: I_j \in I_i} \vert Ch(I_{j}) \vert$, which penalizes equivalent rewards that occur later in the game so that agents prioritize sooner wins. 
The final reward we work with is $R_i(z) := R^{0}_{z} / B_{i}(z)$. Let $Z_{\theta}$ be a learned normalizing constant. The objective we use is
\begin{align*}
  \Ell_{\theta}(\tau) = \log \Bigg(\frac{Z_{\theta} \prod_{I_t \in I_1} P^{\theta}_{1}(I_{t+1}\vert I_t)}{R_{1}(z)B_{2}(z)\prod_{I_{t} \in I_2}P^{\theta}_{2}(I_{t+1}\vert I_t)} \Bigg)
\end{align*}

\subsection{RPS+}

\begin{figure}[htp] 
  \centering
  \subfigure[]{\includegraphics[width=0.4\textwidth]{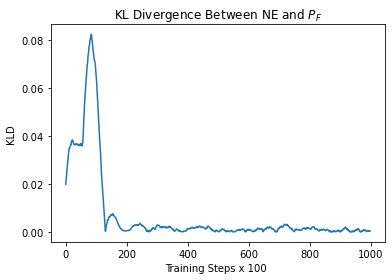}}
  \subfigure[]{\includegraphics[width=0.4\textwidth]{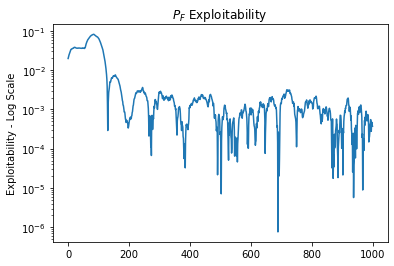}}
  \caption{(a) KLD of IFN policy vs. NE strategy  (b) Exploitability in RPS+}
  \label{fig:weighted-rps-kld}
\end{figure}

\subsection{Kuhn Poker}

\begin{figure}
  \centering
  \includegraphics[width=0.85\linewidth]{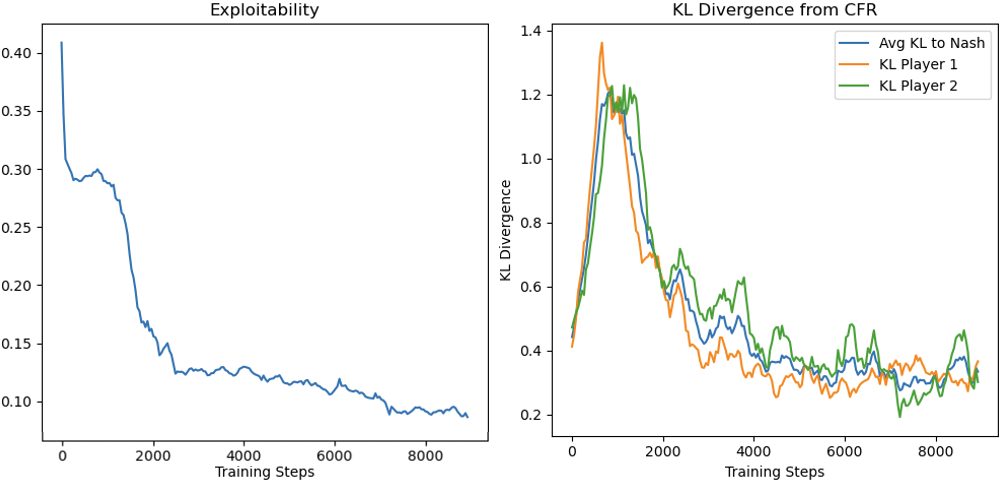}
  \caption{IFN exploitability and KLD from CFR solution. (Exploitability and CFR computed with OpenSpiel utilites in Kuhn OpenSpiel environment.)}
  \label{fig:kuhn}
\end{figure}



\end{document}